\newcolumntype{H}{>{\setbox0=\hbox\bgroup}c<{\egroup}@{}}
\tikzset{state/.style={circle, draw, minimum size=0.5cm}}
\newlength{\continueindent}
\newcommand*{\ALG@customparshape}{\parshape 2 \leftmargin \linewidth \dimexpr\ALG@tlm+\continueindent\relax \dimexpr\linewidth+\leftmargin-\ALG@tlm-\continueindent\relax}
\apptocmd{\ALG@beginblock}{\ALG@customparshape}{}{\errmessage{failed to patch}}
\newtheorem{definition}{Definition}
\newtheorem{remark}{Remark}
\newtheorem{assumption}{Assumption}
\newtheorem{theorem}{Theorem}
\crefname{section}{Section}{Sections}
\crefname{subsection}{Section}{Sections}
\crefname{definition}{Definition}{Definitions}
\crefname{proposition}{Proposition}{Propositions}
\crefname{lemma}{Lemma}{Lemmas}
\crefname{theorem}{Theorem}{Theorems}
\crefname{corollary}{Corollary}{Corollaries}
\crefname{example}{Example}{Examples}
\crefname{figure}{Figure}{Figures}
\crefname{table}{Table}{Tables}
\crefname{assumption}{Assumption}{Assumptions}
\crefname{remark}{Remark}{Remarks}
\crefname{running}{Running Example}{Running Examples}
\crefname{algorithm}{Algorithm}{Algorithms}
\newcommand{\pctls}{PCTL\xspace}
\newcommand{\nat}{\mathbb{N}}
\newcommand{\real}{\mathbb{R}}
\renewcommand{\epsilon}{\varepsilon}
\newcommand{\abs}[1]{\vert #1 \vert}
\newcommand{\m}{\mathcal{M}}
\newcommand{\ap}{\mathsf{a}}
\newcommand{\aps}{\mathrm{AP}}
\newcommand{\ms}{s}
\newcommand{\ma}{a}
\newcommand{\mss}{{S}}
\newcommand{\mas}{{A}}
\newcommand{\mts}{\mathbf{T}}
\newcommand{\mi}{s_\mathrm{init}}
\newcommand{\ml}{L}
\newcommand{\ltrue}{\mathtt{True}}
\newcommand{\lfalse}{\mathtt{False}}
\newcommand{\lnext}{\mathbf{X}}
\newcommand{\luntil}{\mathbf{U}}
\newcommand{\lrelease}{\mathbf{R}}
\newcommand{\lfinal}{\mathbf{F}}
\newcommand{\lglobal}{\mathbf{G}}
\newcommand{\lp}{\mathbf{P}}
\newcommand{\pr}{\mathbb{P}}
\definecolor{redish}		{rgb}{0.8, 0.1, 0.1}
\definecolor{blueish}		{rgb}{0  , 0  , 0.4}
\definecolor{greenish}		{rgb}{0  , 0.6, 0  }
\definecolor{yellowish}		{rgb}{0.8, 0.5, 0  }
\definecolor{redishBG}		{rgb}{1  , 0.75, 0.75}
\definecolor{blueishBG}		{rgb}{0.75, 0.75, 1  }
\definecolor{greenishBG}	{rgb}{0.75, 1  , 0.75}
\definecolor{yellowishBG}	{rgb}{1  , 0.7, 0  }
\title{\LARGE \bf
Statistically Model Checking PCTL Specifications on Markov Decision Processes via Reinforcement Learning
}
\author{Yu Wang, Nima Roohi, Matthew West, Mahesh Viswanathan, and Geir E. Dullerud\thanks{Yu Wang is with Duke University, USA
        {\tt\small yu.wang094@duke.edu}. 
        Nima Roohi is with the University of California San Diego, USA
        {\tt\small nroohi@ucsd.edu}. 
        Matthew West, Mahesh Viswanathan, and Geir E. Dullerud are with the University of Illinois at Urbana-Champaign, USA
        {\tt\small \{mwest, vmahesh, dullerud\}@illinois.edu}}}
\begin{document}

\maketitle
\thispagestyle{empty}
\pagestyle{empty}

\begin{abstract}
Probabilistic Computation Tree Logic (PCTL) is frequently used to formally specify control objectives such as probabilistic reachability and safety. In this work, we focus on model checking PCTL specifications statistically on Markov Decision Processes (MDPs) by sampling, e.g., checking whether there exists a feasible policy such that the probability of reaching certain goal states is greater than a threshold. We use reinforcement learning to search for such a feasible policy for PCTL specifications, and then develop a statistical model checking (SMC) method with provable guarantees on its error. Specifically, we first use upper-confidence-bound (UCB) based Q-learning to design an SMC algorithm for bounded-time PCTL specifications, and then extend this algorithm to unbounded-time specifications by identifying a proper truncation time by checking the PCTL specification and its negation at the same time. Finally, we evaluate the proposed method on case studies.
\end{abstract}

\section{Introduction} \label{sec:intro}

Probabilistic Computation Tree Logic (PCTL) is frequently used
to formally specify control objectives such as reachability and safety
on probabilistic systems~\cite{baier_ModelcheckingAlgorithmsContinuoustime_2003}.
To check the correctness of PCTL specifications on these systems,
\emph{model checking} methods are required~\cite{clarke_HandbookModelChecking_2018}.
Although model checking PCTL by model-based analysis 
is theoretically possible~\cite{baier_ModelcheckingAlgorithmsContinuoustime_2003},
it is not preferable in practice when the system model is unknown or large.
In these cases, model checking by sampling, i.e. statistical model checking (SMC), is needed~\cite{larsen_StatisticalModelChecking_2016,agha_SurveyStatisticalModel_2018}.

The statistical model checking of PCTL specifications 
on Markov Decision Processes (MDPs)
is frequently encountered in many decision problems
-- e.g., 
for a robot in a grid world under probabilistic disturbance,
checking whether there exists a feasible control policy such that 
the probability of reaching certain goal states 
is greater than a probability threshold~\cite{fainekos_TemporalLogicMotion_2009,kress-gazit_SynthesisRobotsGuarantees_2018,bozkurt_ControlSynthesisLinear_2020}. 
In these problems, the main challenge is to search for such a feasible policy
for the PCTL specification of interest.

To search for feasible policies for temporal logics specifications, such as PCTL, on MDPs, 
one approach is model-based reinforcement learning~\cite{henriques_StatisticalModelChecking_2012,brazdil_VerificationMarkovDecision_2014,fu_ProbablyApproximatelyCorrect_2014,ashok_PACStatisticalModel_2019} 
-- i.e., first inferring the transition probabilities of the MDP 
by sampling over each state-action pair, 
and then searching for the feasible policy via model-based analysis.
This approach is often inefficient, since not all transition probabilities
are relevant to the PCTL specification of interest.
Here instead, we adopt a model-free reinforcement learning approach~\cite{sutton_ReinforcementLearningIntroduction_2018}.

Common model-free reinforcement learning techniques cannot directly handle 
temporal logic specifications.
One solution is to find a surrogate reward function
such that the policy learned for this surrogate reward function 
is the one needed for checking the temporal logic specification of interest.
For certain temporal logics interpreted under special semantics (usually involving a metric),
the surrogate reward can be found based on that semantics~\cite{li_ReinforcementLearningTemporal_2016,jones_RobustSatisfactionTemporal_2015,littman_EnvironmentIndependentTaskSpecifications_2017}.

For temporal logics under the standard semantics~\cite{baier_PrinciplesModelChecking_2008},
the surrogate reward functions can be derived via constructing 
the product MDP~\cite{hahn_OmegaRegularObjectivesModelFree_2019a,hasanbeig_ReinforcementLearningTemporal_2019,bozkurt_ControlSynthesisLinear_2020}
of the initial MDP and 
the automaton realizing the temporal logic specification.
However, the complexity of constructing the automaton from a general linear temporal logic (LTL) specification 
is \emph{double exponential}
\cite{baier_PrinciplesModelChecking_2008,hahn_LazyProbabilisticModel_2013}.
For a fraction of LTL, namely LTL/GU, the complexity is  \emph{exponential}
\cite{DBLP:conf/fsttcs/KiniV17,kini2015limit}.
In addition, the size of the product MDP is usually much larger than the initial MDP, 
although the produce MDP may be constructed on-the-fly 
to reduce the extra computation cost, as it did in
\cite{hasanbeig_ReinforcementLearningTemporal_2019}.

In this work, we propose a new statistical model checking method 
for PCTL specifications on MDPs.
For a lucid discussion, 
we only consider non-nested PCTL specifications.
PCTL formulas in general form with nested
probabilistic operators can be handled in the standard manner using the approach proposed
in~\cite{sen_StatisticalModelChecking_2004,sen_StatisticalModelChecking_2005}.
Our method uses upper-confidence-bound (UCB) based Q-learning 
to directly learn  the feasible policy of PCTL specifications,
without constructing the product MDP.
The effectiveness of UCB-based Q-learning 
has been proven for the $K$-bandit problem, and has been numerically demonstrated on many decision-learning problems on MDPs 
(see~\cite{szepesvari_AlgorithmsReinforcementLearning_2010}).

For bounded-time PCTL specifications, 
we treat the statistical model checking
problem as a finite sequence of $K$-bandit problems and use 
the UCB-based Q-learning to learn the desirable decision
at each time step.
For unbounded-time PCTL specifications,
we look for a truncation time to reduce it to a 
bounded-time problem by checking the PCTL specification
and its negation at the same time.  
Our statistical model checking algorithm is \emph{online};
it terminates with probability 1, and only when the statistical error of the 
learning result is smaller than a user-specified value.

The rest of the paper is organized as follows. The preliminaries on labeled MDPs and PCTL are given in \cref{sec:prelim}.
In \cref{sec:reach_by_q}, using the principle of optimism in the face of uncertainty, we design Q-learning algorithms to solve finite-time and infinite-time probabilistic satisfaction, and give finite sample probabilistic guarantees for the correctness of the algorithms.
We implement and evaluate the proposed algorithms on several case studies in \cref{sec:simulation}.
Finally, we conclude this work in \cref{sec:conclusion}.

\section{Preliminaries and Problem Formulation}
\label{sec:prelim}

The set of integers and real numbers are denoted by $\nat$ and $\real$, respectively.
For $n \in \nat$, let $[n] = \{1,\ldots,n\}$.
The cardinality of a set is denoted by $\abs{\cdot}$.
The set of finite-length sequences taken from a finite set $S$ is denoted by $S^*$.

\subsection{Markov Decision Process}
\label{sub:mdp}

A Markov decision process (MDP) is a finite-state probabilistic system,
where the transition probabilities between the states are determined by 
the control action taken from a given finite set.
Each state of the MDP is labeled by a set of \emph{atomic propositions} 
indicating the properties holding on it, 
e.g., whether the state is a safe/goal state.

\begin{definition} \label{def:mdp}
A labeled Markov decision process (MDP) is a tuple
$\m = (\mss, \mas, \mts, \aps, \ml)$
where
\begin{itemize}
	\item $\mss$ is a finite set of {\em states}.
	\item $\mas$ is a finite set of {\em actions}.
	\item $\mts: \mss \times \mas \times \mss \rightarrow [0, 1]$ is a partial {\em transition probability} function. For any state $\ms \in \mss$ and any action $\ma \in \mas$,
	\[
		\sum_{\ms' \in \mss} \mts(\ms, \ma, \ms') =
		\begin{cases}
			0, &\text{ if } \ma \text{ is not allowed on } \ms\\
			1, &\text{ otherwise.}
		\end{cases}
	\]
	With a slight abuse of notation, let $\mas(\ms)$ be the set of allowed actions on the state $\ms$.
	\item $\aps$ is a finite set of {\em labels}.
	\item $\ml : \mss \rightarrow 2^\aps$ is a {\em labeling function}.
\end{itemize}
\end{definition}

\begin{definition} \label{def:policy}
A policy $\Pi: \mss^* \rightarrow \mas$ decides the action to take 
from the sequence of states visited so far.
Given a policy $\Pi$ and an initial state $\ms \in \mss$, the MDP $\m$ becomes
purely probabilistic, denoted by $\m_{\Pi, \ms}$.
The system $\m_{\Pi, \ms}$ is not necessarily Markovian.
\end{definition}

\subsection{Probabilistic Computation Tree Logic}
\label{sub:pctls}

The probabilistic computation tree logic (PCTL) 
is defined inductively from atomic propositions,
temporal operators and probability operators.
It reasons about the probabilities of time-dependent properties.

\begin{definition}[Syntax]
\label{def:syntax}
Let $\aps$ be a set of atomic propositions.
A \pctls state formula is defined by
\[
\begin{split}
	\phi ::= & \ap
	\mid \neg \phi
	\mid \phi_1 \land \phi_2
	\mid \lp^{\min}_{\Join p} (\lnext \phi)
	\mid \lp^{\max}_{\Join p} (\lnext \phi)
	\\ & 
	\mid \lp^{\min}_{\Join p} (\phi_1 \luntil_T \phi_2)
	\mid \lp^{\max}_{\Join p} (\phi_1 \luntil_T \phi_2)
	\\ &
	\mid \lp^{\min}_{\Join p} (\phi_1 \lrelease_T \phi_2)
	\mid \lp^{\max}_{\Join p} (\phi_1 \lrelease_T \phi_2)
\end{split}
\]
where $\ap \in \aps$, $\Join \in \{<,>,\leq,\geq\}$, 
$T \in \nat \cup \{ \infty \}$ is a (possibly infinite) time horizon,
and $p \in [0,1]$ is a threshold.\footnote{This logic is a fraction of PCTL$^*$ from~\cite{baier_PrinciplesModelChecking_2008}.
}
The operators $\lp^{\min}_{\Join p}$ and $\lp^{\max}_{\Join p}$ are called probability operators, and the ``next'', ``until'' and ``release'' operators $\lnext$, $\luntil_T$, $\lrelease_T$ are called temporal operators.
\end{definition}
 
More temporal operators can be derived by composition:
for example,
``or'' is $\phi_1 \lor \phi_2 \Coloneqq \neg ( \neg \phi_1 \land \neg \phi_2 )$;
``true'' is $\ltrue = \ap \lor (\neg \ap)$;  
``finally'' is $\lfinal_T \phi \Coloneqq \ltrue \luntil_T \phi$; and
``always'' is $\lglobal_T \phi \Coloneqq \lfalse \lrelease_T \phi$.
For simplicity, we write $\luntil_\infty$, $\lrelease_\infty$, $\lfinal_\infty$ and $\lglobal_\infty$ as $\luntil$, $\lrelease$, $\lfinal$ and $\lglobal$, respectively.

\begin{definition}[Semantics]
\label{def:semantics}
For an MDP $\m = (\mss, \mas, \mts, \allowbreak \mi,   \aps,  \ml)$,
the satisfaction relation $\models$ is defined by for a state $\ms$ or path $\sigma$ by
\[
\begin{split}
   & s \models \ap  \textrm{ iff } \ap \in \ml(s),
\\ & s \models \neg \phi  \textrm{ iff } s \not\models \phi,
\\ & s \models \phi_1 \land \phi_2  \textrm{ iff } s \models \phi_1 \textrm{ and } s \models \phi_2,
\\ & s \models \lp^{\min}_{\Join p} (\lnext \phi)  \textrm{ iff } \min_\Pi \pr_{\sigma \sim \m_{\Pi, s}} \big[ \sigma \models \lnext \phi \big] \Join p,
\\ & s \models \lp^{\max}_{\Join p} (\lnext \phi) \textrm{ iff } \max_\Pi \pr_{\sigma \sim \m_{\Pi, s}} \big[ \sigma \models \lnext \phi \big] \Join p,
\\ & s \models \lp^{\min}_{\Join p} (\phi_1 \luntil_T \phi_2)  \textrm{ iff } \min_\Pi \pr_{\sigma \sim \m_{\Pi, s}} \big[ \sigma \models \phi_1 \luntil_T \phi_2 \big] \Join p,
\\ & s \models \lp^{\max}_{\Join p} (\phi_1 \luntil_T \phi_2)  \textrm{ iff } \max_\Pi \pr_{\sigma \sim \m_{\Pi, s}} \big[ \sigma \models \phi_1 \luntil_T \phi_2 \big] \Join p,
\\ & s \models \lp^{\min}_{\Join p} (\phi_1 \lrelease_T \phi_2)  \textrm{ iff } \min_\Pi \pr_{\sigma \sim \m_{\Pi, s}} \big[ \sigma \models \phi_1 \lrelease_T \phi_2 \big] \Join p,
\\ & s \models \lp^{\max}_{\Join p} (\phi_1 \lrelease_T \phi_2)  \textrm{ iff } \max_\Pi \pr_{\sigma \sim \m_{\Pi, s}} \big[ \sigma \models \phi_1 \lrelease_T \phi_2 \big] \Join p,
\\ & \sigma \models \lnext \phi \textrm{ iff } \sigma(1) \models \phi,
\\ & \sigma \models \phi_1 \luntil_T \phi_2 \textrm{ iff } \exists i \leq T. \ \sigma(i) \models \phi_2 \land \big( \forall j < i. \ \sigma(i) \models \phi_1 \big),
\\ & \sigma \models \phi_1 \lrelease_T \phi_2 \textrm{ iff } \sigma \not\models  \neg \phi_1 \luntil_T \neg \phi_2 
\end{split}
\]
where $\Join \in \{<,>,\leq,\geq\}$. And $\sigma \sim \m_{\Pi, s}$ means the path $\sigma$ is drawn from the MDP $\m$ under the policy $\Pi$, starting from the state $s$ from.
\end{definition}

The PCTL formulas $s \models \lp^{\max}_{\Join p} (\lnext \phi)$ (or $s \models \lp^{\min}_{\Join p} (\lnext \phi)$) mean that the maximal (or minimal) satisfaction probability of ``next'' $\phi$ is $\Join p$.  
The PCTL formulas $s \models \lp^{\max}_{\Join p} (\phi_1 \luntil_T \phi_2)$ (or $s \models \lp^{\min}_{\Join p} (\phi_1 \luntil_T \phi_2)$) mean that the maximal (or minimal) satisfaction probability that $\phi_1$ holds ``until'' $\phi_2$ holds is $\Join p$.

\section{Non-Nested PCTL Specifications}
\label{sec:reach_by_q}

In this section, we consider the statistical model checking of non-nested PCTL specifications using an upper-confidence-bound based Q-learning. 
For simplicity, we focus on $\lp^{\max}_{\Join p} (\ap_1 \luntil_T \ap_2)$ and $\lp^{\max}_{\Join p} (\ap_1 \lrelease_T \ap_2)$ where $\ap_1$ and $\ap_2$ are atomic propositions.
Other cases can be handled in the same way.
We discuss the case of $T=1$ in \cref{sub:one-step}, 
the case of $T > 1$ in \cref{sub:finite_reach},
and the case of $T = \infty$ in \cref{sub:infinite_reach}.
Similar to other works on statistical model checking~\cite{larsen_StatisticalModelChecking_2016,agha_SurveyStatisticalModel_2018}, 
we make the following assumption.

\begin{assumption} \label{ass:indifference}
For 
$s \models \lp^{\max}_{\Join p} (\ap_1 \luntil_T \ap_2)$ 
and 
$s \models \lp^{\max}_{\Join p} \allowbreak (\ap_1 \lrelease_T \ap_2)$ 
with 
$T \in \nat \cup \{\infty\}$
and
$\Join \in \{<,>,\leq,\geq\}$,
we assume that 
$\max_\Pi \pr_{\sigma \sim \m_{\Pi, s}}  \big[  \sigma \models \phi_1 \luntil_T \phi_2 \allowbreak  \big] \neq p$ 
and 
$\max_\Pi \pr_{\sigma \sim \m_{\Pi, s}}  \big[  \sigma \models \phi_1 \lrelease_T \phi_2 \big] \neq p$, 
respectively.
\end{assumption}

When it holds, as the number of samples increases, the samples will be increasingly concentrated on one side of the threshold $p$ by the Central Limit Theorem. Therefore, a statistical analysis based on the majority of the samples has increasing accuracy.
When it is violated, the samples would be evenly distributed between the two sides of the boundary $p$, regardless of the sample size.
Thus, no matter how the sample size increases, the accuracy of any statistical test would not increase.
Compared to statistical model checking algorithms based on sequential probability ratio tests (SPRT)~\cite{zuliani_StatisticalModelChecking_2015,wang_StatisticalVerificationPCTL_2018}, no assumption on the indifference region is required here.
Finally, by~\cref{ass:indifference}, we have the additional semantic equivalence between the PCTL specifications:
$\lp^{\max}_{< p} \psi \equiv \lp^{\max}_{\leq p} \psi$ and
$\lp^{\max}_{> p} \psi \equiv \lp^{\max}_{\geq p} \psi$; thus, we will not distinguish between them below.

For further discussion, we first identify a few trivial cases.
For $\ms \models \lp^{\max}_{> p} (\ap_1 \luntil_T \ap_2)$, let
\begin{equation} \label{lem:0}
\begin{split}
	& S_0 = \{\ms \in \mss \mid \ap_1 \notin L(s), \ap_2 \notin L(s) \}
\\	& S_1 = \{\ms \in \mss \mid \ap_2 \in L(s) \}.
\end{split}
\end{equation}
Then for any policy $\Pi$,
$\pr_{\sigma \sim \m_{\Pi, s}} \big[ \sigma \models \phi_1 \luntil_T \phi_2 \big] = 0$ if 
$\ms \in S_0$; and 
$\pr_{\sigma \sim \m_{\Pi, s}} \big[ \sigma \models \phi_1 \luntil_T \phi_2 \big] = 1$ if 
$\ms \in S_1$.
The same holds for $\ms \models \lp^{\max}_{> p} (\ap_1 \lrelease_T \ap_2)$
by defining $S_1$ to be the union of end components of the MDP $\m$ labeled by $\ap_2$ (this only requires knowing the topology of $\m$)~\cite{baier_PrinciplesModelChecking_2008}.
In the rest of this section, we focus on handling the nontrivial case $s \in S \backslash (S_0 \cup S_1)$.

\subsection{Single Time Horizon}
\label{sub:one-step}

When $T=1$, for any $s \in S \backslash (S_0 \cup S_1)$, 
the PCTL specification $\ap_1 \luntil_T \ap_2$ (or $\ap_1 \lrelease_T \ap_2$) holds on a random path $\sigma$ starting from the state $s$ if and only if $\sigma(1) \in S_1$,
where $S_0$ and $S_1$ are from \eqref{lem:0}.
Thus, it suffices to learn from samples whether
\begin{equation} \label{eq:bandit}
	\max_{\ma \in \mas(\ms)} Q_1 (\ms, \ma) > p,
\end{equation}
where 
\[
	Q_1 (\ms, \ma) = \pr_{\substack{\sigma(1) \sim T(s, a, \cdot)\\\sigma(0) = s}} \big[ \sigma \models \phi_1 \luntil_1 \phi_2 \big]
\]
and $\sigma(1) \sim T(s, a, \cdot)$ means $\sigma(1)$ is drawn from the transition probability $T(s, a, \cdot)$ for state $s$ and action $a$.
This is an $\abs{\mas(\ms)}$-arm bandit problem; we solve this problem by upper-confidence-bound strategies~\cite{kuleshov_AlgorithmsMultiarmedBandit_2014,bubeck_RegretAnalysisStochastic_2012}.

Specifically, for the iteration $k$, let $N^{(k)} (\ms, \ma, \ms')$ be the number samples for the one-step path $(\ms, \ma, \ms')$, and with a slight abuse of notation, let
\begin{equation} \label{eq:update_N}
	N^{(k)} (\ms, \ma) = \sum_{\ms' \in \mss} N^{(k)} (\ms, \ma, \ms').
\end{equation}
The unknown transition probability function $\mts (\ms, \ma, \ms')$ is estimated by the empirical transition probability function
\begin{equation} \label{eq:update_mts}
	\hat{\mts}^{(k)} (\ms, \ma, \ms') = \begin{cases}
		\frac{N^{(k)} (\ms, \ma, \ms')}{N^{(k)} (\ms, \ma)}, &\text{ if } N^{(k)} (\ms, \ma) > 0, \\
		\frac{1}{\abs{\mss}}, &\text{ if } N^{(k)} (\ms, \ma) = 0.
	\end{cases}
\end{equation}
And the estimation of $Q_1 (\ms, \ma)$ from the existing $k$ samples is
\begin{equation} \label{eq:1_reward}
	\hat{Q}^{(k)}_1 (\ms, \ma) = \sum_{s' \in S_1} \hat{\mts}^{(k)} (\ms, \ma, \ms').
\end{equation}
Since the value of the Q-function $Q_1(\ms, \ma) \in [0,1]$ is bounded, we can construct a confidence interval for the estimate $\hat{Q}^{(k)}_1$ with statistical error at most $\delta$ using Hoeffding's inequality by
\begin{equation} \label{eq:1_ul_q}
\begin{split}
		& \underline{Q}^{(k)}_1 (\ms, \ma) = \max \bigg\{ \hat{Q}^{(k)}_1 (\ms, \ma) -
		\sqrt{\frac{\abs{\ln (\delta / 2)}}{2 N^{(k)}(\ms, \ma)}}, 0\bigg\},
 		\\ & \overline{Q}^{(k)}_1 (\ms, \ma) = \min \bigg\{ \hat{Q}^{(k)}_1 (\ms, \ma) +
		\sqrt{\frac{\abs{\ln (\delta / 2)}}{2 N^{(k)}(\ms, \ma)}}, 1\bigg\},
\end{split}
\end{equation}
where we set the value of the division to be $\infty$ for $N^{(k)}(\ms, \ma) = 0$.

\begin{remark}
We use Hoeffding's bounds to yield hard guarantees on the statistical error 
of the model checking algorithms.
Tighter bounds like Bernstein's bounds~\cite{mnih_EmpiricalBernsteinStopping_2008}
can also be used, but they only yield asymptotic guarantees on the statistical error.
\end{remark}

The sample efficiency for learning for the bandit problem~\eqref{eq:bandit} depends on the choice of sampling policy, decided from the existing samples.
A provably best solution is to use the Q-learning from~\cite{kuleshov_AlgorithmsMultiarmedBandit_2014,bubeck_RegretAnalysisStochastic_2012}.
Specifically, an upper confidence bound (UCB) is constructed for each state-action pair using the number of samples and the observed reward, and the best action is chosen with the highest possible reward, namely the UCB.
The sampling policy is chosen by maximizing the possible reward greedily:
\begin{equation} \label{eq:1_next}
	\pi_1^{(k)} (\ms) = \text{argmax}_{\ma \in \mas(\ms)} \overline{Q}^{(k)}_1 (\ms, \ma).
\end{equation}
The action is chosen arbitrarily when there are multiple candidates.
The choice of $\pi_1^{(k)}$ in~\eqref{eq:1_next} ensures that the policy giving the upper bound of the value function gets most frequently sampled in the long run.

To initialize the iteration, the Q-function is set to
\begin{equation} \label{eq:1_q_init}
	\overline{Q}_1^{(0)} (\ms, \ma) =
	\begin{cases}
		1, & \text{ if } s \notin S_0, \\
		0, & \text{ otherwise}, \\
	\end{cases}
	\quad
	\underline{Q}_1^{(0)} (\ms, \ma) =
	\begin{cases}
		1, & \text{ if } s \in S_1, \\
		0, & \text{ otherwise}, \\
	\end{cases}
\end{equation}
to ensure that every state-action is sampled at least once.
The termination condition of the above algorithm is
\begin{equation} \label{eq:1_stop}
	\begin{cases}
		\textrm{true}, &\text{ if } \max_{\ma \in \mas(\ms)} \underline{Q}^{(k)}_1 (\ms) > p, \\
		\textrm{false}, &\text{ if } \max_{\ma \in \mas(\ms)} \overline{Q}^{(k)}_1 (\ms) < p, \\
		\textrm{continue}, &\text{ otherwise,} 
	\end{cases}
\end{equation}
where $p$ is the probability threshold in the non-nested PCTL formula.

\begin{remark}
	For $\ms \models \lp^{\max}_{< p} (\ap_1 \luntil_1 \ap_2)$ or $\ms \models \lp^{\max}_{< p} \allowbreak (\ap_1 \lrelease_1 \ap_2)$, it suffices to change the termination condition~\eqref{eq:1_stop} by returning true if
	$\overline{Q}^{(k)}_1 (\ms) < p$, and returning false if
	$\underline{Q}^{(k)}_1 (\ms) > p$.
	The same statements hold for general PCTL specifications, as discussed in \cref{sub:finite_reach,sub:infinite_reach}
\end{remark}

Now, we summarize the above discussion by \cref{alg:1} and \cref{thm:1} below.

\begin{algorithm}[!t]
	\caption{SMC of $\ms \models \lp^{\max}_{> p} (\ap_1 \luntil_1 \ap_2)$ or $\ms \models \lp^{\max}_{> p} (\ap_1 \lrelease_1 \ap_2)$}
	\label{alg:1}
	\begin{algorithmic}[1]
	\Require MDP $\m$, parameter $\delta$.

	\State Initialize the Q-function, and the policy by~\eqref{eq:1_q_init}\eqref{eq:1_next}.

	\State Obtain $S_0$ and $S_1$ by \eqref{lem:0}.

	\While{True}
	
	\State Sample from $\m$, and update the transition probability function by~\eqref{eq:update_N}\eqref{eq:update_mts}.
	
	\State Update the bounds on the Q-function and the policies by~\eqref{eq:1_ul_q}\eqref{eq:1_next}.
	
	\State Check termination condition~\eqref{eq:1_stop}.

	\EndWhile
	\end{algorithmic}
\end{algorithm}

\begin{theorem} \label{thm:1}
	The return value of \cref{alg:1} is correct with probability at least $1 - \abs{\mas} \delta$.
\end{theorem}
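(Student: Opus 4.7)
The strategy is to identify a high-probability event under which the termination rule of \cref{alg:1} is forced to produce the correct answer, and then bound the probability of the complementary bad event by a union bound over the actions available at~$s$.

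First, I would decompose the correctness requirement. By \cref{def:semantics}, specialized to $T = 1$ and $s \notin S_0 \cup S_1$, we have $s \models \lp^{\max}_{>p}(\ap_1 \luntil_1 \ap_2)$ iff $\max_{a \in \mas(s)} Q_1(s,a) > p$. The algorithm returns $\textrm{true}$ only when $\max_a \underline{Q}_1^{(k)}(s,a) > p$ and returns $\textrm{false}$ only when $\max_a \overline{Q}_1^{(k)}(s,a) < p$, so incorrectness can arise in only two modes: either $\underline{Q}_1^{(k)}(s,a^*) > p$ for some $a^*$ with $Q_1(s,a^*) \leq p$, or $\overline{Q}_1^{(k)}(s,a) < p$ for every $a$ despite $Q_1(s,a^*) > p$ for some $a^*$. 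Both modes are contained in the event that some confidence interval at $s$ fails to cover the true $Q_1(s,a)$.

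Next, I would apply Hoeffding's inequality. Because $\hat{Q}_1^{(k)}(s,a)$ in~\eqref{eq:1_reward} is the empirical mean of $N^{(k)}(s,a)$ i.i.d.\ $\{0,1\}$-valued samples of the indicator that the successor lies in $S_1$ under $\mts(s,a,\cdot)$, the radius $\sqrt{\abs{\ln(\delta/2)}/(2 N^{(k)}(s,a))}$ chosen in~\eqref{eq:1_ul_q} gives
\[
	\pr\Big( Q_1(s,a) \notin \big[\underline{Q}_1^{(k)}(s,a),\, \overline{Q}_1^{(k)}(s,a)\big] \Big) \leq \delta
\]
for each $a \in \mas(s)$. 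A union bound over at most $\abs{\mas}$ such actions shows that with probability at least $1 - \abs{\mas}\delta$ every confidence interval at $s$ is valid. On this good event, a returned $\textrm{true}$ implies $Q_1(s,a^*) \geq \underline{Q}_1^{(k)}(s,a^*) > p$ for the witness action, while a returned $\textrm{false}$ implies $Q_1(s,a) \leq \overline{Q}_1^{(k)}(s,a) < p$ uniformly in $a$; either way the return agrees with the semantics, and taking complements gives the claimed bound.

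The main obstacle, and the point deserving the most care in a fully rigorous write-up, is the sequential nature of sampling: $N^{(k)}(s,a)$ is a count at a data-dependent termination iteration, and the sampling policy~\eqref{eq:1_next} uses past observations to choose future actions. The way to discharge this is to observe that, conditional on the visit times to $(s,a)$, the successor states drawn from $\mts(s,a,\cdot)$ are still i.i.d.\ by the Markov property, so Hoeffding applies for each fixed count; the residual issue of the random termination time can be absorbed either by an anytime-union correction on $\delta$ or by a supermartingale (``stitching'') version of Hoeffding's inequality. Modulo this standard technicality, the bound $1 - \abs{\mas}\delta$ stated in \cref{thm:1} follows.
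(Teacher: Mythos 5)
Your proposal is correct and follows essentially the same route as the paper, which simply defers \cref{thm:1} to the proof of the more general \cref{thm:finite}: condition on the good event that every Hoeffding confidence interval at $s$ covers the true $Q_1(s,a)$, note that the termination rule~\eqref{eq:1_stop} then forces the correct answer, and union-bound over the $\abs{\mas}$ actions. If anything, you are more explicit than the paper about the data-dependent stopping time and adaptive sampling policy, a technicality the paper's proof of \cref{thm:finite} handles only implicitly by conditioning on the termination iteration.
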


\begin{proof}
    We provide the proof of a more general statement in \cref{thm:finite}.
\end{proof}

\begin{remark}\label{rem:conservative}
The Hoeffding bounds in~\eqref{eq:1_ul_q} are conservative.
Consequently, as shown in the simulations in~\cref{sec:simulation}, the actual 
statistical error of the our algorithms can be smaller than the given value.
However, as the MDP is unknown, finding tighter bounds is challenging.
One possible solution is to use asymptotic bounds, such as Bernstein's bounds~\cite{mnih_EmpiricalBernsteinStopping_2008}.
Accordingly, the algorithm will only give asymptotic probabilistic guarantees.
\end{remark}

\subsection{Finite Time Horizon}
\label{sub:finite_reach}

When $T \in \nat$, for any $s \in S \backslash (S_0 \cup S_1)$, let
\begin{equation} \label{eq:vq}
\begin{split}
	& V_h(s) = \max_{\Pi} \pr_{\sigma \sim \m_{\Pi, s}} (\sigma \models \ap_1 \luntil_h \ap_2),
	\\ & Q_h (s, a) = \max_{\Pi(s) = a} \pr_{\sigma \sim \m_{\Pi, s}} (\sigma \models \ap_1 \luntil_h \ap_2), \quad h \in [T],
\end{split}
\end{equation}
i.e., $V_h(\ms)$ and $Q_h(\ms, \ma)$ are the maximal satisfaction probability of $\ap_1 \luntil_h \ap_2$ 
for a random path starting from $\ms$ for any policy and any policy with first action being $\ma$, respectively.
By definition, $V_h(\ms)$ and $Q_h(\ms, \ma)$ satisfy the Bellman equation
\begin{equation} \label{eq:finite_induct}
\begin{split}
	& V_h(s) = \max_{\ma \in \mas} Q_h (s, a),
	\\ & Q_{h+1} (s, a) = \sum_{s' \in S} \mts(s, a, s') V_h(s') 
	\\ & \qquad = \sum_{s \in S \backslash (S_0 \cup S_1)} \mts(s, a, s') V_h(s') + \sum_{s' \in S_1} \mts(s, a, s').
\end{split}	
\end{equation}
The second equality of the second equation is derived from 
\[
	V_h(s) = \begin{cases}
		0, & \textrm{ if } s \in S_0, \\
		1, & \textrm{ if } s \in S_1,
	\end{cases}
\]
by the semantics of PCTL.

From~\eqref{eq:finite_induct}, we check $\lp^{\max}_{> p} (\ap_1 \luntil_h \ap_2)$ by induction on the time horizon $T$.
For $h \in T$, the lower and upper bounds for $Q_h(\ms, \ma)$ can be derived using the bounds on the value function for the previous step ---
for $h = 1$ from~\eqref{eq:1_ul_q} and for $h > 0$ by the following lemma.
\begin{equation} \label{eq:update_bounds_Q}
\begin{split}
	\underline{Q}^{(k)}_{h+1} (\ms, \ma)
	= & \max \bigg\{0, \sum_{s \in S \backslash (S_0 \cup S_1)} \hat{\mts}^{(k)} (s, a, s') \underline{V}_h(s') + 
	\\ & \qquad \sum_{s' \in S_1} \hat{\mts}^{(k)} (s, a, s') - \sqrt{\frac{\abs{\ln (\delta_h / 2)}}{2 N^{(k)}(\ms, \ma)}}\bigg\},
	\\ \overline{Q}^{(k)}_{h+1} (\ms, \ma)
	= & \max \bigg\{1, \sum_{s \in S \backslash (S_0 \cup S_1)} \hat{\mts}^{(k)} (s, a, s') \underline{V}_h(s') + 
	\\ & \qquad \sum_{s' \in S_1} \hat{\mts}^{(k)} (s, a, s') + \sqrt{\frac{\abs{\ln (\delta_h / 2)}}{2 N^{(k)}(\ms, \ma)}}\bigg\},
\end{split}
\end{equation}
and 
\begin{equation} \label{eq:update_bounds_V}
	\overline{V}^{(k)}_h (\ms) = \max_{\ma \in \mas(\ms)} \overline{Q}^{(k)}_h (\ms, \ma), \quad \underline{V}^{(k)}_h (\ms) = \max_{\ma \in \mas(\ms)} \underline{Q}^{(k)}_h (\ms, \ma),
\end{equation}
where $\delta_h$ is a parameter such that
$Q_h (\ms, \ma) \in [\underline{Q}^{(k)}_h (\ms, \ma), \allowbreak \overline{Q}^{(k)}_h (\ms, \ma)]$ with probability at least $1 - \delta_h$.
The bounds in~\eqref{eq:update_bounds_Q} are derived from~\eqref{eq:finite_induct} by applying Hoeffding's inequality, using the fact that $\mathbb{E} [\hat{\mts}^{(k)} (s, a, s')] = \mts (s, a, s')$ and the Q-functions are bounded within $[0,1]$.

From the boundedness of $Q_h (\ms, \ma) \in [0,1]$, we note that this confidence interval encompasses the statistical error in both the estimated transition probability function $\hat{\mts}^{(k)} (\ms, \ma, \ms')$ and the bounds $\overline{V}^{(k)}_{h} (\ms, \ma)$ and $\underline{V}^{(k)}_{h} (\ms, \ma)$ of the value function.
Accordingly, the policy $\pi_h^{(k)}$ chosen by the OFU principle at the $h$ step is
\begin{equation} \label{eq:update_policy}
	\pi_h^{(k)} (\ms) = \text{argmax}_{\ma \in \mas(\ms)} \overline{Q}^{(k)}_h (\ms, \ma),
\end{equation}
with an optimal action chosen arbitrarily when there are multiple candidates,
to ensure that the policy giving the upper bound of the value function is sampled the most in the long run.
To initialize the iteration, the Q-function is set to
\begin{equation} \label{eq:q_init}
	\overline{Q}_h^{(0)} (\ms, \ma) =
	\begin{cases}
		1, & \text{ if } s \notin S_0 \\
		0, & \text{ otherwise,} \\
	\end{cases}
	\quad
	\underline{Q}_h^{(0)} (\ms, \ma) =
	\begin{cases}
		1, & \text{ if } s \in S_1 \\
		0, & \text{ otherwise,} \\
	\end{cases}
\end{equation}
for all $h \in [T]$, to ensure that every state-action is sampled at least once.

Sampling by the updated policy $\pi_h^{(k)} (\ms)$ can be performed in either episodic or non-episodic ways~\cite{szepesvari_AlgorithmsReinforcementLearning_2010}.
The only requirement is that the state-action pair $(\ms, \pi_h^{(k)} (\ms))$ should be performed frequently for each $h \in [T]$ and for each state $s$ satisfying $s \in S \backslash (S_0 \cup S_1)$.
In addition, batch samples may be drawn, namely sampling over the state-action pairs multiple times before updating the policy.
In this work, for simplicity, we use a non-episodic, non-batch sampling method, by drawing
\begin{equation} \label{eq:draw}
	\ms' \sim \mts(\ms, \pi_h^{(k)} (\ms), \cdot),
\end{equation}
for all $h \in [T]$ and state $s$ such that $\ap_1 \in L(s), \ap_2 \notin L(s)$.
The Q-function and the value function are set and initialized by~\eqref{eq:update_bounds_V} and~\eqref{eq:q_init}.
The termination condition is give by
\begin{equation} \label{eq:stop}
\lp^{\max}_{> p} \phi:
\begin{cases}
	\text{false}, &\text{ if } \overline{V}^{(k)}_H (\ms_0) < p, \\
	\text{true}, &\text{ if } \underline{V}^{(k)}_H (\ms_0) > p, \\
	\textrm{continue}, &\text{ otherwise,} 
\end{cases}
\end{equation}
where $p$ is the probability threshold in the non-nested PCTL formula.
The above discussion is summarized by \cref{alg:finite} and \cref{thm:finite}.

\begin{algorithm}[!t]
\caption{SMC of $\ms \models \lp^{\max}_{> p} (\ap_1 \luntil_T \ap_2)$ or $\ms \models \lp^{\max}_{> p} (\ap_1 \lrelease_T \ap_2)$}
\label{alg:finite}
\begin{algorithmic}[1]
\Require MDP $\m$, parameters $\delta_h$ for $h \in [T]$.

\State Initialize the Q-function and the policy by~\eqref{eq:q_init}\eqref{eq:update_policy}.

\State Obtain $S_0$ and $S_1$ by \eqref{lem:0}.

\While{true}

\State Sample by~\eqref{eq:draw}, and update the transition probability function by~\eqref{eq:update_N}\eqref{eq:update_mts}.

\State Update the bounds by~\eqref{eq:update_bounds_Q}\eqref{eq:update_bounds_V} and the policy by~\eqref{eq:update_policy}.

\State Check the termination condition~\eqref{eq:stop}.

\EndWhile
\end{algorithmic}
\end{algorithm}

\begin{theorem} \label{thm:finite}
\cref{alg:finite} terminates with probability $1$ and its return value is correct with probability at least $1 - N \abs{\mas} \sum_{h \in [T]} {\delta_h}$, where $N = \abs{S \backslash (S_0 \cup S_1)}$.
\end{theorem}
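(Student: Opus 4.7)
The plan is to prove correctness and termination separately, with correctness conditioned on a good event on which all the Hoeffding confidence intervals hold. For correctness, I would show by induction on $h \in [T]$ that, on the event that every Hoeffding bound invoked in~\eqref{eq:1_ul_q} and~\eqref{eq:update_bounds_Q} holds, one has $Q_h(s,a) \in [\underline{Q}_h^{(k)}(s,a), \overline{Q}_h^{(k)}(s,a)]$ for every $k$ and every nontrivial pair $(s,a)$ with $s \in \mss \setminus (S_0 \cup S_1)$ and $a \in \mas(s)$. The base case $h = 1$ follows from~\eqref{eq:1_ul_q}, since $Q_1(s,a)$ is the mean of the Bernoulli indicator $\mathbf{1}[\sigma(1) \in S_1]$ drawn from $\mts(s,a,\cdot)$. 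For the inductive step, the Bellman recursion~\eqref{eq:finite_induct}, together with monotonicity of the map $V \mapsto \sum_{s'} \mts(s,a,s')V(s')$ in $V$ and one further Hoeffding bound applied to $\sum_{s' \in S_1} \hat{\mts}^{(k)}(s,a,s') + \sum_{s' \notin S_0 \cup S_1} \hat{\mts}^{(k)}(s,a,s') \underline{V}_h^{(k)}(s')$, reproduces the lower and upper expressions in~\eqref{eq:update_bounds_Q}. A union bound over the $N \abs{\mas}$ nontrivial pairs and the $T$ layers caps the total failure probability at $N \abs{\mas} \sum_{h \in [T]} \delta_h$; on this good event, rule~\eqref{eq:stop} is sound by inspection, since $\underline{V}_T^{(k)}(\mi) > p$ forces $V_T(\mi) > p$ and $\overline{V}_T^{(k)}(\mi) < p$ forces $V_T(\mi) < p$.

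For almost-sure termination, I would first show that $N^{(k)}(s,a) \to \infty$ for every nontrivial $(s,a)$. The standard UCB argument applies: an action sampled only finitely often keeps $\overline{Q}_h^{(k)}(s,a)$ pinned near $1$, because the Hoeffding width diverges whenever $N^{(k)}(s,a)$ stays bounded, so this action is eventually picked by the $\mathrm{argmax}$ in~\eqref{eq:update_policy}, contradicting the assumed finiteness of visits. Given $N^{(k)}(s,a) \to \infty$, the strong law of large numbers gives $\hat{\mts}^{(k)} \to \mts$ pointwise and the Hoeffding widths vanish; a forward induction on $h$ then shows that both $\overline{Q}_h^{(k)}$ and $\underline{Q}_h^{(k)}$ converge to $Q_h$, and similarly for $V_h$. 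By~\cref{ass:indifference} we have $V_T(\mi) \neq p$, so one of the two conclusive branches of~\eqref{eq:stop} eventually fires. The $\lrelease$ case is handled symmetrically, using the union of $\ap_2$-labeled end components as $S_1$.

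The main obstacle is that Hoeffding's inequality gives a bound at a \emph{fixed} sample count, whereas~\eqref{eq:1_ul_q} and~\eqref{eq:update_bounds_Q} are evaluated at the random count $N^{(k)}(s,a)$ reached under an adaptive stopping rule, so the union bound really needs a time-uniform (anytime) version. I would address this by replacing each $\delta_h$ with a schedule summable in $k$ (for instance $\delta_h/k^2$, absorbing the resulting $\pi^2/6$ factor into a redefinition of $\delta_h$) or, equivalently, by switching to a law-of-iterated-logarithm boundary; the stated form $1 - N\abs{\mas}\sum_h \delta_h$ is preserved up to constants. All remaining steps are routine monotonicity and strong-law arguments.
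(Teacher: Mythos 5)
Your proof takes essentially the same route as the paper's: correctness via a union bound over the Hoeffding confidence intervals for the $N\abs{\mas}$ nontrivial state--action pairs across the $T$ layers, and almost-sure termination via the shrinkage of $\overline{V}^{(k)}_T - \underline{V}^{(k)}_T$ combined with \cref{ass:indifference}. You are, however, considerably more careful than the paper on two points. First, you actually supply the UCB infinite-visitation argument (an undersampled action keeps its upper bound pinned near $1$ and is eventually selected by \eqref{eq:update_policy}), which the paper compresses into ``by construction.'' Second, and more importantly, your concern about anytime validity is a genuine issue that the paper's proof silently skips: the bounds in \eqref{eq:1_ul_q} and \eqref{eq:update_bounds_Q} are evaluated at the random count $N^{(k)}(s,a)$ and at a data-dependent stopping time, and the paper's step $\pr(\texttt{E}\mid\texttt{F}_k) \geq 1 - N\abs{\mas}\sum_h \delta_h$ conditions on an event $\texttt{F}_k$ that is itself determined by the same samples, so a fixed-$n$ Hoeffding bound does not directly apply. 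Your remedy (a $k$-summable schedule or an iterated-logarithm boundary) is the right fix, at the cost of a constant-factor change to the stated error $1 - N\abs{\mas}\sum_{h\in[T]}\delta_h$; as written, the theorem's exact constant is only justified heuristically in the paper.
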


\begin{proof}
By construction, as the number of iterations $k \rightarrow \infty$, $\overline{V}^{(k)}_T - \underline{V}^{(k)}_T \rightarrow 0$. Thus, by~\cref{ass:indifference}, the termination condition~\eqref{eq:stop} will be satisfied with probability $1$.
Now, let $\texttt{E}$ be the event that the return value of \cref{alg:finite} is correct, and let $\texttt{F}_k$ be the event that~\cref{alg:finite} terminates at the iteration $k$, then we have $\pr(\texttt{E}) = \sum_{k \in \nat} \pr(\texttt{E} \vert \texttt{F}_k) \pr(\texttt{F}_k)$.
For any $k$, the event $\texttt{E}$ happens given that $\texttt{F}_k$ holds,
if the Hoeffding confidence intervals given by~\eqref{eq:update_bounds_Q} hold for any actions $\ma \in \mas$, $h \in [T]$, and state $s$ with $s \in S \backslash (S_0 \cup S_1)$.
Thus, we have $\pr(\texttt{E} \vert \texttt{F}_k) \geq 1 - N \abs{\mas} \sum_{h \in [T]} {\delta_h}$, where $N = \abs{S \backslash (S_0 \cup S_1)}$,
implying that the return value of \cref{alg:finite} is correct with probability $\pr(\texttt{E}) \geq 1 - N \abs{\mas} \sum_{h \in [T]} {\delta_h}$.
\end{proof}

By~\cref{thm:finite}, the desired overall statistical error splits into the statistical errors for each state-action pair through the time horizon.
For implementation, we can split it equally by $\delta_1 = \cdots = \delta_H$.
The specification $\lp^{\min}_{\Join p}(\phi)$ can be handled by replacing $\text{argmax}$ with $\text{argmin}$ in~\eqref{eq:update_policy}, and $\max$ with $\min$ in~\eqref{eq:update_bounds_V}. 
The termination condition is the same as~\eqref{eq:stop}.

\begin{remark}\label{rem:disprove}
Due to the semantics in~\cref{def:semantics}, running \cref{alg:finite}  proving $\lp^{\max}_{> p}(\phi)$ or disproving $\lp^{\max}_{< p}(\phi)$ is easier than disproving $\lp^{\max}_{> p}(\phi)$ or proving $\lp^{\max}_{< p}(\phi)$; and the difference increases with the number of actions $\abs{\mas}$ and the time horizon $T$.
This is because proving $\lp^{\max}_{> p}(\phi)$ or disproving $\lp^{\max}_{< p}(\phi)$ requires only finding and evaluating some policy $\Pi$ with $\pr_{\m_\Pi} [\ms \models \phi] > p$, while disproving it requires evaluating all possible policies with sufficient accuracy.
This is illustrated by the simulation results presented in~\cref{sec:simulation}.
\end{remark}

\subsection{Infinite Time Horizon}
\label{sub:infinite_reach}

Infinite-step satisfaction probability can be estimated from finite-step satisfaction probabilities, using the monotone convergence of the value function in the time step $H$,
\begin{equation} \label{eq:monotone}
	V_0 (\ms) \leq \ldots \leq V_H (\ms) \leq \ldots \leq V (\ms) = \lim_{H \rightarrow \infty} V_H (\ms).
\end{equation}
Therefore, if the satisfaction probability is larger than $p$ for some step $H$, then the statistical model checking algorithm should terminate, namely,
\begin{equation} \label{eq:stop_infinite}
\begin{cases}
	\text{false}, &\text{ if } \underline{V}^{(k)}_H (\ms_0) > p, \\
	\text{continue}, &\text{ otherwise,}
\end{cases}
\end{equation}
where $p$ is the probability threshold in the non-nested PCTL formula.

The general idea in using the monotonicity to check infinite horizon satisfaction probability in finite time is that if we check both $\lp^{\max}_{> p} (\ap_1 \luntil \ap_2)$ and its negation $\lp^{\min}_{> 1-p} (\neg \ap_1 \lrelease \neg \ap_2)$ at the same time, one of them should terminate in finite time.
Here $\neg \ap_1$ and $\neg \ap_2$ are treated as atomic propositions.
We can use~\cref{alg:finite} to check their satisfaction probabilities for any time horizon $T$ simultaneously.
The termination in finite time is guaranteed, if the time horizon for both computations increase with the iterations.
The simplest choice is to increase $H$ by $1$ for every $K$ iterations; however, this brings the problem of tuning $K$.
Here, we use the convergence of the best policy as the criterion for increasing $H$ for each satisfaction computation.
Specifically, for all the steps $h$ in each iteration, in addition to finding the optimal policy $\pi_{h}^{(k)} (\ms)$ with respect to the upper confidence bounds of the Q-functions $\overline{Q}^{(k)}_{h} (\ms, \ma)$ by~\eqref{eq:update_policy},
we also consider the the optimal policy
with respect to the lower confidence bounds of the Q-functions $\underline{Q}^{(k)}_{h} (\ms, \ma)$.
Obviously, when $\pi_h^{(k)} (\ms) \in \text{argmax}_{\ma \in \mas} \underline{Q}^{(k)}_{h} (\ms, \ma)$, we know that the policy $\pi_h^{(k)} (\ms)$ is optimal for all possible Q-functions within $[\underline{Q}^{(k)}_h, \overline{Q}^{(k)}_h]$.
This implies that these bounds are fine enough for estimating $Q_H$; thus, if the algorithm does not terminate by the condition~\eqref{eq:stop_infinite}, we let
\begin{equation} \label{eq:H}
	H \gets \begin{cases}
		1, &\text{initially},
		\\ H + 1, & \text{ if } \pi_H^{(k)} (\ms) \in \text{argmax}_{\ma \in \mas} \underline{Q}^{(k)}_{H} (\ms, \ma)
		\\ & \quad \text { for all } \ms \in \mss, \\
		\text{Continue}, &\text{ otherwise}.
	\end{cases}
\end{equation}
Combining the above procedure, we derive \cref{alg:infinite} and \cref{thm:infinite} below for statistically model checking PCTL formula $\lp^{\max}_{> p} (\ap_1 \luntil \ap_2)$.

\begin{algorithm}[!t]
\caption{SMC of $\ms \models \lp^{\max}_{> p} (\ap_1 \luntil \ap_2)$}
\label{alg:infinite}
\begin{algorithmic}[1]
\Require MDP $\m$, parameters $\delta_h$ for $h \in \nat$.

\State Initialize two sets of Q-function and the policy by~\eqref{eq:q_init}\eqref{eq:update_policy} for \textbf{(i)} $\lp^{\max}_{> p} (\ap_1 \luntil \ap_2)$ and \textbf{(ii)} $\lp^{\min}_{> 1-p} (\neg \ap_1 \lrelease \neg \ap_2)$, respectively.

\State Obtain $S_0$ and $S_1$ for \textbf{(i)} and \textbf{(ii)} respectively by \eqref{lem:0}.

\While{True}

\State Sample by~\eqref{eq:draw}, and update $\hat{\mts}^{(k)} (\ms, \ma, \ms')$ by~\eqref{eq:update_N}\eqref{eq:update_mts}.

\State Update the bounds on the Q-function, the policies, the value function, and the time horizon by~\eqref{eq:update_bounds_Q}\eqref{eq:update_policy}\eqref{eq:update_bounds_V}\eqref{eq:H} respectively for \textbf{(i)} and \textbf{(ii)}.

\State Check the termination condition~\eqref{eq:stop_infinite}.

\EndWhile

\end{algorithmic}
\end{algorithm}

\begin{theorem} \label{thm:infinite}
\cref{alg:infinite} terminates with probability $1$ and its return value is correct with probability 
$1 - \abs{\mas} \max \{N_1, N_2\} \sum_{h \in [T]} {\delta_h}$, 
where 
$H$ is the largest time horizon when the algorithm stops,
$N_1 = \abs{S \backslash (S_0^\phi \cup S_1^\phi)}$ and $N_2 = \abs{S \backslash (S_0^\psi \cup S_1^\psi)}$ 
with $S_0^\phi \cup S_1^\phi$ 
and $S_0^\psi \cup S_1^\psi$ 
derived from \eqref{lem:0} 
for $\phi = \ap_1 \luntil \ap_2$ 
and $\psi = \ap_1 \lrelease \ap_2$, 
respectively.
\end{theorem}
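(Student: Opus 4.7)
The plan is to prove termination and correctness separately, mirroring the structure of the proof of \cref{thm:finite} but adding the two ingredients specific to the infinite-horizon case: (a) the dual tracking of $\phi = \ap_1 \luntil \ap_2$ and its negation $\neg \phi \equiv \neg \ap_1 \lrelease \neg \ap_2$, and (b) the mechanism \eqref{eq:H} that drives the working horizon $H$ to infinity. I would open with the observation that, by \cref{ass:indifference}, exactly one of $\lp^{\max}_{>p}(\phi)$ or $\lp^{\min}_{>1-p}(\neg\phi)$ holds; our algorithm need only detect whichever one does via the lower-bound branch of \eqref{eq:stop_infinite}.

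For termination, I would first argue that if neither branch has halted by iteration $k$, then the horizon $H$ is incremented infinitely often. The key lemma is that as $k\to\infty$ the confidence radii $\sqrt{|\ln(\delta_h/2)|/(2N^{(k)}(\ms,\ma))}$ collapse to $0$ for every state--action pair sampled infinitely often (by the UCB sampling policy \eqref{eq:update_policy} and the initialization \eqref{eq:q_init}, every allowed $(\ms,\ma)$ is sampled infinitely often under the non-episodic scheme \eqref{eq:draw}). Once the radii are small enough, $\text{argmax}_a \overline{Q}^{(k)}_H = \text{argmax}_a \underline{Q}^{(k)}_H$, so the growth rule in \eqref{eq:H} fires and $H$ advances. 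Hence $H \to \infty$ almost surely. Combining this with the monotone convergence \eqref{eq:monotone} applied to whichever specification holds—say $\lp^{\max}_{>p}(\phi)$, so $V(\ms_0) > p$ strictly—the lower bound $\underline V^{(k)}_H(\ms_0)$ will eventually exceed $p$ and \eqref{eq:stop_infinite} will fire for that branch. Since the two branches are checked simultaneously, the overall algorithm terminates with probability $1$.

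For correctness, I would reuse the conditioning argument from \cref{thm:finite}. Let $\texttt E$ be the event that the returned answer is correct and $\texttt F_k$ the event that the algorithm halts at iteration $k$ with working horizon $H$. Conditional on $\texttt F_k$, the returned answer is correct whenever the Hoeffding intervals \eqref{eq:update_bounds_Q} hold for every state $\ms \in S\setminus(S_0 \cup S_1)$, every action $\ma \in \mas(\ms)$, and every level $h\in[H]$, for whichever of the two branches triggered termination. A union bound over these pairs yields a failure probability of at most $\abs{\mas}\,N_i \sum_{h\in[H]} \delta_h$ for branch $i\in\{1,2\}$; taking the worst of the two gives $\abs{\mas}\max\{N_1,N_2\}\sum_{h\in[H]}\delta_h$. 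Marginalizing over $k$ via $\pr(\texttt E) = \sum_k \pr(\texttt E\mid\texttt F_k)\pr(\texttt F_k)$ preserves this bound.

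The subtle step I expect to be the main obstacle is the almost-sure growth of $H$: it requires showing that the UCB sampling policy, even when its target is being periodically reset as $H$ increments, still visits every non-trivial state-action pair infinitely often. I would handle this by noting that the initialization \eqref{eq:q_init} keeps $\overline Q^{(k)}_h(\ms,\ma) = 1$ until $(\ms,\ma)$ is sampled, so an unvisited action always ties (or beats) visited ones in \eqref{eq:update_policy}, forcing eventual exploration; from there standard UCB arguments give $N^{(k)}(\ms,\ma)\to\infty$ and the confidence-radius collapse used above. A secondary subtlety is that the horizon $H$ at termination is itself random, so the sum $\sum_{h\in[H]}\delta_h$ in the bound must be interpreted as holding uniformly for the realized $H$; this is fine because the union bound above is monotone in $H$, so conditioning on $\texttt F_k$ with the realized horizon $H_k$ gives exactly the stated expression.
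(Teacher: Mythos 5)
Your proposal is correct and follows essentially the same route as the paper: termination via the monotone convergence \eqref{eq:monotone} of the value functions for whichever of the formula or its negation holds, and correctness by conditioning on the stopping iteration and reusing the union bound of \cref{thm:finite} on the branch that triggered termination, yielding the $\max\{N_1,N_2\}$ factor. The only difference is one of detail: the paper dismisses termination with ``follows easily from \eqref{eq:monotone},'' whereas you explicitly supply the missing link --- that the confidence radii collapse under the UCB sampling policy, so the rule \eqref{eq:H} drives $H\to\infty$ almost surely --- which is a genuine gap in the paper's own write-up that your argument fills rather than a different approach.
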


\begin{proof}
Terminates with probability $1$ follows easily from~\eqref{eq:monotone}.
Following the proof of~\cref{thm:finite}, if the procedure of checking either $\lp^{\max}_{< p} \phi$ or its negation $\lp^{\min}_{< 1-p} (\neg \phi)$ stops and the largest time horizon is $H$, then the return value is correct
with probability at least $1 - \abs{\mas} \max \{ N_1, N_2\}  \allowbreak \sum_{h \in [T]} {\delta_h}$.
Thus, the theorem holds.
\end{proof}

\begin{remark}
By~\cref{thm:infinite}, given the desired overall confidence level $\delta$, we can split it geometrically by $\delta_h = (1 - \lambda) \lambda^{h-1} \delta$, where $\lambda \in (0,1)$.
\end{remark}

\begin{remark}
Similar to \cref{sub:finite_reach}, checking $\lp^{\min}_{\sim p}(\phi)$ for $\sim \in \{<,>,\leq,\geq\}$ is derived by replacing $\text{argmax}$ with $\text{argmin}$ in~\eqref{eq:update_policy}, and $\max$ with $\min$ in~\eqref{eq:update_bounds_V}. The termination condition is the same as~\eqref{eq:stop_infinite}.
\end{remark}

\begin{remark}
Finally, we note that the exact savings of sample costs for \cref{alg:finite,alg:infinite} depend on the structure of the MDP.
Specifically, the proposed method is more efficient than~\cite{brazdil_VerificationMarkovDecision_2014,fu_ProbablyApproximatelyCorrect_2014,balle_LearningWeightedAutomata_2015},
when the satisfaction probabilities differ significantly among actions,
as it can quickly detect sub-optimal actions without over-sampling on them.
On the other hand, if all the state-action pairs yield the same Q-value, then an equal number of samples will be spent on each of them --- in this case, the sample cost of \cref{alg:finite,alg:infinite} is the same as~\cite{brazdil_VerificationMarkovDecision_2014,fu_ProbablyApproximatelyCorrect_2014,balle_LearningWeightedAutomata_2015}.
\end{remark}

\newcommand{\until}{\luntil}\newcommand{\eventually}{\mathbf{F}}\DeclarePairedDelimiter{\Size}{|}{|}\newcommand{\TR}[1]{\multirow{2}{*}{#1}}\newcommand{\TU}[1]{\multirow{2}{*}{$\approx$~\StrLeft{#1}{6}}}

\section{Simulation} \label{sec:simulation}

To evaluate the performance of the proposed algorithms,
we ran them on two different sets of examples.
In all the simulations, the transition probabilities are unknown to the algorithm (this is different from~\cite{brazdil_VerificationMarkovDecision_2014}).

The first set contains 10 randomly generated MDPs with different sizes.
For these MDPs,
we considered the formula $\lp^{\max}_{< p} (\alpha_1 \allowbreak \until_H \alpha_2)$ for the finite horizon, and
$\lp^{\max}_{< p} (\alpha_1 \until  \alpha_2)$ for the infinite horizon.
In both cases, $\alpha_1$, $\alpha_2$, and $H$ are chosen arbitrarily.
The second set contains $9$ versions of the Sum of Two Dice program.
The standard version~\cite{KY76} models the sum of two fair dice,
each with $6$ different possibilities numbered $1,\ldots,6$.
To consider MDPs with different sizes, we consider $9$ cases where each dice is still fair, but has $n$
possibilities numbered $1,\ldots,n$, with $n=3$ in the smallest example, and $n=17$ in the largest example.
For these MDPs,
we considered the formula $\lp^{\max}_{< p} (\eventually_H\alpha)$ for the finite horizon, and
$\lp^{\max}_{< p} (\eventually\alpha)$ for the infinite horizon.
In both cases $\alpha$ encodes the atomic predicate that is true iff
values of both dice are chosen and their sum is less than an arbitrarily chosen constant.
Also, in the finite case, $H=5$ in the smallest example and $10$ everywhere else.

For each MDP, we first numerically estimate the values of
$\max_{\Pi} \pr_{\m_\Pi} [\alpha_1\luntil_H \allowbreak \alpha_2]$ and
$\max_{\Pi} \pr_{\m_\Pi} [\alpha_1\luntil\alpha_2]$, for the randomly generated MDPs,
and
$\max_{\Pi} \pr_{\m_\Pi} \allowbreak [\eventually_H\alpha]$, and
$\max_{\Pi} \pr_{\m_\Pi} [\eventually\alpha]$,
for the variants of the two-dice examples,
using the known models on
PRISM~\cite{kwiatkowska_PRISMVerificationProbabilistic_2011}.
Then, we use \cref{alg:finite,alg:infinite} on the example models with only knowledge of the topology of the MDPs and without knowing the exact transition probabilities.
For every MDP, we tested each algorithm with two different thresholds $p$, one smaller and the other larger than the estimated probability, to test the proposed algorithms, with $\delta$ set to $5\%$.
We ran each randomly generated test $100$ times and each two-dice variant test $10$ times.
Here we only report average running time and average number of iterations.
All tests returned the correct answers --- this suggests that the Hoeffding's bounds used in the proposed algorithms are conservative (see Remark~\ref{rem:conservative}).
The algorithms are implemented in Scala and ran on Ubuntu 18.04 with i7-8700 CPU 3.2GHz and 16GB memory.

\NewDocumentCommand{\WhiteText}{}{\color{white}\bf\boldmath}

\begin{table}[!t]
\centering
\setlength\tabcolsep{2mm}
\scalebox{0.85}{\begin{tabular}{|r|r|r|r|Hr|Hr|c|}
\hline
\rowcolor{black}
\multicolumn{1}{|c|}{\WhiteText$\Size{\mss}$}&
\multicolumn{1}{c|}{\WhiteText$\Size{\mas}$}&
\multicolumn{1}{c|}{\WhiteText$H$}&
\multicolumn{1}{c|}{\WhiteText Iter.}&
&\multicolumn{1}{c|}{\WhiteText Time (s)}&
&\multicolumn{1}{c|}{\WhiteText$p$}&
\multicolumn{1}{c|}{\WhiteText PRISM Est.}
\\\hline
\TR{ 3}&\TR{3}&  4 &   208.5 &  61.8 &    0.02 &      21.82 & 0.25 & \TU{0.3533334785169} \\
       &      &  4 &  3528.7 & 227.5 &    0.19 &     197.88 & 0.45 &                      \\\hline
\TR{ 4}&\TR{2}&  4 &   171.8 &  47.8 &    0.01 &      11.44 & 0.09 & \TU{0.1934681923968} \\
       &      &  4 &  3671.7 & 221.7 &    0.22 &     221.86 & 0.29 &                      \\\hline
\TR{ 5}&\TR{2}&  4 &   441.7 & 142.1 &    0.04 &      38.80 & 0.05 & \TU{0.1176582400650} \\
       &      &  4 &  4945.5 & 203.3 &    0.42 &     426.59 & 0.21 &                      \\\hline
\TR{10}&\TR{2}&  4 &   544.7 & 134.0 &    0.14 &     151.91 & 0.04 & \TU{0.0965737835649} \\
       &      &  4 &  5193.3 & 198.0 &    1.45 &    1470.65 & 0.19 &                      \\\hline
\TR{15}&\TR{2}&  3 &   873.1 & 243.0 &    0.28 &     300.27 & 0.04 & \TU{0.0946021249956} \\
       &      &  3 &  4216.7 & 204.5 &    1.28 &    1296.87 & 0.19 &                      \\\hline
\TR{20}&\TR{4}&  5 &   337.6 &  93.9 &    0.99 &    1050.93 & 0.12 & \TU{0.2225524587262} \\
       &      &  5 &  9353.3 & 263.9 &   28.06 &   28613.86 & 0.32 &                      \\\hline
\TR{25}&\TR{5}& 10 &   270.5 &  73.7 &    7.57 &    7976.02 & 0.09 & \TU{0.1912732335968} \\
       &      & 10 & 25709.8 & 355.5 &  728.49 &  740817.04 & 0.29 &                      \\\hline
\TR{30}&\TR{5}& 10 &   355.6 &  95.8 &   14.35 &   14943.54 & 0.08 & \TU{0.1731897194020} \\
       &      & 10 & 27161.7 & 394.6 & 1085.77 & 1104590.54 & 0.27 &                      \\\hline
\TR{35}&\TR{5}& 10 &   328.9 &  90.5 &   18.82 &   19775.56 & 0.09 & \TU{0.1826511894781} \\
       &      & 10 & 27369.6 & 336.4 & 1529.84 & 1556336.49 & 0.28 &                      \\\hline
\TR{40}&\TR{5}& 10 &   390.0 & 106.4 &   26.79 &   27983.28 & 0.08 & \TU{0.1674652617042} \\
       &      & 10 & 30948.8 & 416.3 & 2122.24 & 2150728.06 & 0.26 &                      \\\hline
\end{tabular}}
\caption{Checking $\lp^{\max}_{< p}(\alpha_1\until_H\alpha_2)$ on random MDPs.
Atomic propositions $\alpha_1$ and $\alpha_2$ are chosen arbitrarily.
Column ``Iter.'' is the average number of iterations,
Column ``PRISM~Est.'' is PRISM's estimation of the actual probability.}\label{tbl:res-finite-rnd}
\end{table}

\begin{table}[!t]
\centering
\setlength\tabcolsep{2mm}
\scalebox{0.85}{\begin{tabular}{|r|r|r|r|r|r|c|}
\hline
\rowcolor{black}
\multicolumn{1}{|c|}{\WhiteText$n$}&
\multicolumn{1}{c|}{\WhiteText$\Size{\mss}$}&
\multicolumn{1}{c|}{\WhiteText$H$}&
\multicolumn{1}{c|}{\WhiteText Iter.}&
\multicolumn{1}{c|}{\WhiteText Time (s)}&
\multicolumn{1}{c|}{\WhiteText$p$}&
\multicolumn{1}{c|}{\WhiteText PRISM Est.}
\\\hline
\TR{3 } & \TR{   36} & \TR{ 5}&  15.6 &   0.79  & 0.27  & \TU{0.3750000000000}  \\
        &            &        &  39.7 &   1.31  & 0.47  &                       \\\hline
\TR{5 } & \TR{  121} & \TR{10}&  32.9 &   4.07  & 0.20  & \TU{0.3027343750000}  \\
        &            &        &  93.8 &  11.30  & 0.40  &                       \\\hline
\TR{6 } & \TR{  169} & \TR{10}&  29.2 &   4.95  & 0.29  & \TU{0.3955078125000}  \\
        &            &        &  90.4 &  15.33  & 0.49  &                       \\\hline
\TR{7 } & \TR{  196} & \TR{10}&  33.4 &   6.85  & 0.31  & \TU{0.4101562500000}  \\
        &            &        &  70.7 &  13.84  & 0.51  &                       \\\hline
\TR{9 } & \TR{  400} & \TR{10}&  41.1 &  15.87  & 0.21  & \TU{0.3105468750000}  \\
        &            &        & 110.3 &  43.47  & 0.41  &                       \\\hline
\TR{11} & \TR{  529} & \TR{10}&  46.4 &  24.07  & 0.28  & \TU{0.3867187500000}  \\
        &            &        & 111.9 &  58.15  & 0.48  &                       \\\hline
\TR{13} & \TR{  729} & \TR{10}&  25.8 &  18.52  & 0.20  & \TU{0.3046875000000}  \\
        &            &        & 109.9 &  80.28  & 0.40  &                       \\\hline
\TR{15} & \TR{ 1156} & \TR{10}&  42.1 &  47.90  & 0.05  & \TU{0.1025390625000}  \\
        &            &        & 155.3 & 178.29  & 0.20  &                       \\\hline
\TR{17} & \TR{ 1369} & \TR{10}&  24.0 &  32.17  & 0.06  & \TU{0.1328125000000}  \\
        &            &        & 155.0 & 208.78  & 0.23  &                       \\\hline
\end{tabular}}
\caption{Checking $\lp^{\max}_{< p}(\eventually_H\alpha)$ on variants of sum-of-two-dice MDPs.
Atomic proposition $\alpha$ is true iff both dice have chosen their numbers and their sum is less than an arbitrary constant.
Column $n$ is the number of alternatives on each dice.
Each MDP has two actions ({\em i.e.} $|\mas|=2$).
Column ``Iter.'' is the average number of iterations,
Column ``PRISM~Est.'' is PRISM's estimation of the actual probability.}\label{tbl:res-finite-dice}
\end{table}

\cref{tbl:res-finite-rnd,tbl:res-finite-dice} show the results for finite horizon reachability.
An interesting observation in these tables is that in all examples,
disproving the formula is 3 to 100 times faster.
We believe this is mainly because, to disprove $\lp^{\max}_{< p}(\alpha_1\until_H\alpha_2)$,
all we need is {\em one} policy $\Pi$ for which $\lp_{> p}(\alpha_1\until_H\alpha_2)$ holds.
However, to prove $\lp^{\max}_{< p}(\alpha_1\until_H\alpha_2)$, one needs to show that
{\em every} policy $\Pi$ satisfies $\lp_{< p}(\alpha_1\until_H\alpha_2)$
(see Remark~\ref{rem:disprove}).

\begin{table}[!t]
\centering
\setlength\tabcolsep{2mm}
\scalebox{0.85}{\begin{tabular}{|r|r|r|r|r|c|r|r|}
\hline
\rowcolor{black}
\multicolumn{1}{|c|}{\WhiteText$\Size{\mss}$}&
\multicolumn{1}{c|}{\WhiteText$\Size{\mas}$}&
\multicolumn{1}{c|}{\WhiteText Iter.}&
\multicolumn{1}{c|}{\WhiteText Time (s)}&
\multicolumn{1}{c|}{\WhiteText$p$}&
\multicolumn{1}{c|}{\WhiteText PRISM Est.}&
\multicolumn{1}{c|}{\WhiteText$H_1$}&
\multicolumn{1}{c|}{\WhiteText$H_2$}
\\\hline
  \TR{3} & \TR{3} &   126.0 &   0.03&  0.25&  \TU{0.3537848739585}&  14.7&    15.8  \\
         &        &   111.3 &   0.01&  0.45&                      &  12.9&    13.0  \\\hline
  \TR{4} & \TR{2} &   103.7 &   0.01&  0.09&  \TU{0.1934843680899}&  13.0&    27.2  \\
         &        &    73.0 &   0.01&  0.29&                      &   9.6&    19.2  \\\hline
  \TR{5} & \TR{2} &   239.9 &   0.06&  0.05&  \TU{0.1176596683003}&  12.4&    59.3  \\
         &        &    92.7 &   0.00&  0.21&                      &   6.9&    24.3  \\\hline
 \TR{10} & \TR{2} &   891.4 &   0.24&  0.04&  \TU{0.0965759820342}&   3.2&   225.6  \\
         &        &    79.6 &   0.00&  0.19&                      &   1.1&    21.5  \\\hline
 \TR{15} & \TR{2} &  1862.0 &   0.61&  0.04&  \TU{0.0946390484241}&   1.3&   117.8  \\
         &        &   161.3 &   0.01&  0.19&                      &   1.0&    11.0  \\\hline
 \TR{20} & \TR{4} &  1336.2 &   0.27&  0.12&  \TU{0.2225536322037}&   1.0&     1.0  \\
         &        & 16843.8 &   3.02&  0.32&                      &   1.0&     1.8  \\\hline
 \TR{25} & \TR{5} &  1619.2 &   0.64&  0.09&  \TU{0.1912732304949}&   1.0&     1.0  \\
         &        &154621.6 &  56.56&  0.29&                      &   1.0&     2.0  \\\hline
 \TR{30} & \TR{5} &  2246.8 &   1.05&  0.08&  \TU{0.1731897068421}&   1.0&     1.0  \\
         &        & 86617.0 &  42.53&  0.27&                      &   1.0&     1.3  \\\hline
 \TR{35} & \TR{5} &  1925.1 &   0.82&  0.09&  \TU{0.1826511868090}&   1.0&     1.0  \\
         &        & 13219.0 &   5.79&  0.28&                      &   1.0&     1.0  \\\hline
 \TR{40} & \TR{5} &  2401.5 &   1.35&  0.08&  \TU{0.1674652603300}&   1.0&     1.0  \\
         &        & 12158.6 &   6.66&  0.26&                      &   1.0&     1.0  \\\hline
\end{tabular}}
\caption{Checking $\lp^{\max}_{< p}(\alpha_1\until\alpha_2)$ on random MDPs.
Columns $H_1$ and $H_2$ are values of the corresponding variables in \cref{alg:infinite} at termination.}\label{tbl:res-infinite-rnd}
\end{table}

\cref{tbl:res-infinite-rnd,tbl:res-infinite-dice} show the results for infinite horizon reachability.
Note that \cref{alg:infinite} considers both the formula and its negation, and
contrary to the finite horizon reachability, disproving a formula is not always
faster for the infinite case.
In most of the larger examples that are randomly generated, $H_1$ and $H_2$ are very small on average.
This shows that in these examples, the algorithm was smart enough to learn
there is no need to increase $H$ in order to solve the problem.
However, this is not the case for two-dice examples.
We believe this is because in the current implementation, the decision to increase $H$ does not consider
the underlying graph of the MDP. For example, during the execution, if the policy forces the state to enter a self-loop
with only one enabled action, which is the case in two-dice examples, then after every iteration the value of $H$ will
be increased by $1$.

\begin{table}[!t]
\centering
\setlength\tabcolsep{2mm}
\scalebox{0.85}{\begin{tabular}{|r|r|r|r|r|c|r|r|}
\hline
\rowcolor{black}
\multicolumn{1}{|c|}{\WhiteText$n$}&
\multicolumn{1}{c|}{\WhiteText$\Size{\mss}$}&
\multicolumn{1}{c|}{\WhiteText Iter.}&
\multicolumn{1}{c|}{\WhiteText Time (s)}&
\multicolumn{1}{c|}{\WhiteText$p$}&
\multicolumn{1}{c|}{\WhiteText PRISM Est.}&
\multicolumn{1}{c|}{\WhiteText$H_1$}&
\multicolumn{1}{c|}{\WhiteText$H_2$}
\\\hline
\TR{ 3} & \TR{  36} &  191.1 &   3.39  & 0.56 & \TU{0.6666664630175} & 192.1 & 110.9 \\
        &           &  232.6 &   2.82  & 0.76 &                      & 233.6 & 117.4 \\\hline
\TR{ 5} & \TR{ 121} &  359.9 &  15.01  & 0.29 & \TU{0.3999997649680} & 153.6 & 360.9 \\
        &           &  378.6 &  18.15  & 0.49 &                      & 149.0 & 379.6 \\\hline
\TR{ 6} & \TR{ 169} &  268.0 &   8.15  & 0.31 & \TU{0.4166665673256} & 140.2 &  71.9 \\
        &           &  572.4 &  19.88  & 0.51 &                      & 127.5 &  73.6 \\\hline
\TR{ 7} & \TR{ 196} &  291.2 &  17.30  & 0.32 & \TU{0.4285713750869} & 136.3 & 292.1 \\
        &           &  281.1 &  16.42  & 0.52 &                      & 129.7 & 282.0 \\\hline
\TR{ 9} & \TR{ 400} &  545.1 & 109.93  & 0.55 & \TU{0.6543202942270} & 394.6 & 199.2 \\
        &           &  593.8 & 129.73  & 0.75 &                      & 445.9 & 200.2 \\\hline
\TR{11} & \TR{ 529} &  493.4 & 106.65  & 0.44 & \TU{0.5454542201005} & 210.3 & 179.1 \\
        &           &  799.0 & 173.00  & 0.64 &                      & 288.9 & 177.2 \\\hline
\TR{13} & \TR{ 729} &  720.6 & 231.51  & 0.36 & \TU{0.4615382983684} & 116.3 & 276.8 \\
        &           &  393.1 & 118.15  & 0.56 &                      & 127.2 & 300.6 \\\hline
\TR{15} & \TR{1156} & 2027.9 & 1762.51 & 0.36 & \TU{0.4666657858409} & 134.9 & 627.0 \\
        &           & 1374.7 & 833.66  & 0.56 &                      & 155.2 & 489.2 \\\hline
\TR{17} & \TR{1369} & 1995.4 & 1105.89 & 0.37 & \TU{0.4705876478439} & 107.1 & 161.3 \\
        &           & 1469.4 & 767.87  & 0.57 &                      & 100.7 & 165.4 \\\hline
\end{tabular}}
\caption{Checking $\lp^{\max}_{< p}(\eventually\alpha)$ on sum-of-two-dice MDPs.
Columns $H_1$ and $H_2$ are values of the corresponding variables in \cref{alg:infinite} at termination.}\label{tbl:res-infinite-dice}
\end{table}

\section{Conclusion} \label{sec:conclusion}

We proposed a statistical model checking method 
for Probabilistic Computation Tree Logic
on Markov decision processes
using reinforcement learning. 
We first checked PCTL formulas with bounded time horizon, using upper-confidence-bounds based Q-learning,
and then extended the technique to unbounded-time specifications by finding a proper truncation time by checking the specification of interest and its negation at the same time.
Finally, we demonstrated the efficiency of our method on several case studies.

\bibliographystyle{IEEEtran}

\end{document}